\newcommand{\bmf}{\mathbf f}
\newcommand{\bmg}{\mathbf g}
\newcommand{\bmu}{\mathbf u}
\newcommand{\bmv}{\mathbf v}
\newcommand{\bmw}{\mathbf w}
\newcommand{\bmx}{\mathbf x}
\newcommand{\bmy}{\mathbf y}
\newcommand{\bmz}{\mathbf z}
\newcommand{\bmtheta}{\boldsymbol \theta}
\newcommand{\mbF}{\mathbbm F}
\newcommand{\mbR}{\mathbbm R}
\newcommand{\mbX}{\mathbbm X}
\newcommand{\mbY}{\mathbbm Y}
\newcommand{\mcN}{\mathcal N}
\newcommand{\sfT}{\mathsf T}
\newacronym{dnn}{DNN}{Deep Neural Network}
\newacronym{gcn}{GCN}{Graph Convolutional Network}
\newacronym{gnn}{GNN}{Graph Neural Network}
\newacronym{kfac}{KFAC}{Kronecker-Factored Approximate Curvature}
\newacronym{ngd}{NGD}{Natural Gradient Descent}
\newacronym{nggd}{NGGD}{Natural Graph Gradient Descent}
\newacronym{nn}{NN}{Neural Network}
\newacronym{sgd}{SGD}{Stochastic Gradient Descent}
\newtheorem{lemma}{Lemma}
\begin{document}

\title{Optimization of Graph Neural Networks with Natural Gradient Descent}

\makeatletter
\newcommand{\linebreakand}{%
  \end{@IEEEauthorhalign}
  \hfill\mbox{}\par
  \mbox{}\hfill\begin{@IEEEauthorhalign}
}
\makeatother

\author{
\IEEEauthorblockN{Mohammad Rasool Izadi \IEEEauthorrefmark{1}\IEEEauthorrefmark{2}} 
\IEEEauthorblockA{mizadi@nd.edu}
\and
\IEEEauthorblockN{Yihao Fang \IEEEauthorrefmark{2}} 
\IEEEauthorblockA{yfang5@nd.edu}
\and
\IEEEauthorblockN{Robert Stevenson \IEEEauthorrefmark{1}} 
\IEEEauthorblockA{rls@nd.edu}
\and
\IEEEauthorblockN{Lizhen Lin \IEEEauthorrefmark{2}}
\IEEEauthorblockA{lizhen.lin@nd.edu}
\linebreakand
\IEEEauthorrefmark{1}Electrical Engineering, \IEEEauthorrefmark{2}Applied and Computational Mathematics and Statistics\\
\textit{University of Notre Dame} \\
Notre Dame, IN, USA
}

\maketitle
\thispagestyle{plain}
\pagestyle{plain}

\begin{abstract}
In this work, we propose to employ information-geometric tools to optimize a graph neural network architecture such as the graph convolutional networks. 
More specifically, we develop optimization algorithms for the graph-based semi-supervised learning by employing the natural gradient information in the optimization process. 
This allows us to efficiently exploit the geometry of the underlying statistical model or parameter space for optimization and inference. 
To the best of our knowledge, this is the first work that has utilized the natural gradient for the optimization of graph neural networks that can be extended to other semi-supervised problems. 
Efficient computations algorithms are developed and extensive numerical studies are conducted to demonstrate the superior performance of our algorithms over existing algorithms such as ADAM and SGD.
\end{abstract}

\begin{IEEEkeywords}
Graph neural network, Fisher information, natural gradient descent, network data.
\end{IEEEkeywords}

\section{Introduction} \label{sec:intro}
In machine learning, the cost function is mostly evaluated using labeled samples that are not easy to collect.
Semi-supervised learning tries to find a better model by using unlabeled samples.
Most of the semi-supervised methods are based on a graph representation on (transformed) samples and labels \cite{kolesnikov2019revisiting}.
For example, augmentation methods create a new graph in which original and augmented samples are connected.
Graphs, as datasets with linked samples, have been the center of attention in semi-supervised learning.
\acrfull{gnn}, initially proposed to capture graph representations in neural networks \cite{scarselli2008graph}, have been used for semi-supervised learning in a variety of problems like node classification, link predictions, and so on.
The goal of each \acrshort{gnn} layer is to transform features while considering the graph structure by aggregating information from connected or neighboring nodes.
When there is only one graph, the goal of node classification becomes predicting node labels in a graph while only a portion of node labels are available (even though the model might have access to the features of all nodes).
Inspired by the advance of convolutional neural networks \cite{lecun1998gradient} in computer vision \cite{krizhevsky2012imagenet}, \acrfull{gcn} \cite{kipf2016semi}  employs the spectra of graph Laplacian for filtering signals and the kernel can be approximated using Chebyshev  polynomials or functions \cite{zhou2018graph, wu2020comprehensive}. 
GCN has become a standard and popular tool in the emerging field of geometric deep learning \cite{bronstein2017geometric}.

From the optimization perspective, \acrfull{sgd}-based methods that use an estimation of gradients have been popular choices due to their simplicity and efficiency. 
However, \acrshort{sgd}-based algorithms may be slow in convergence and hard to tune on large datasets.
Adding extra information about gradients, may help with the convergence but are not always possible or easy to obtain. 
For example, using second-order gradients like the Hessian matrix, resulting in the Newton method, is among the best choices which, however, are not easy to calculate especially in \acrshort{nn}s.
When the dataset is large or samples are redundant, \acrshort{nn}s are trained using methods built on top of \acrshort{sgd} like AdaGrad \cite{duchi2011adaptive} or Adam \cite{kingma2014adam}.
Such methods use the gradients information from previous iterations or simply add more parameters like momentum to the \acrshort{sgd}.
\acrfull{ngd} \cite{amari1998natural} provides an alternative based on the second-moment of gradients. Using an estimation of the inverse of the \textit{Fisher information matrix} (simply Fisher), \acrshort{ngd} transforms gradients into so-called \textit{natural gradients} that showed to be much faster compared to the \acrshort{sgd} in many cases.  The use of \acrshort{ngd} allows efficient exploration of the geometry of the underlying parameter space in the optimization process. 
Also, Fisher information plays a pivotal role throughout statistical modeling \cite{ly2017tutorial}.  
In frequentist statistics, Fisher information is used to construct hypothesis tests and confidence intervals by maximum likelihood estimators. 
In Bayesian statistics, it defines the Jeffreys's prior, a default prior commonly used for estimation problems and nuisance parameters in a Bayesian hypothesis test. 
In minimum description length, Fisher information measures the model complexity and its role in model selection within the minimum description length framework like AIC and BIC. 
Under this interpretation, NGD is invariant to any smooth and invertible reparameterization of the model, while \acrshort{sgd}-based methods highly depend on the parameterization.
For models with a large number of parameters like \acrshort{dnn}, Fisher is so huge that makes it almost impossible to evaluate natural gradients.
Thus, for faster calculation it is preferred to use an approximation of Fisher like \acrfull{kfac} \cite{martens2015optimizing} that are easier to store and inverse.

Both \acrshort{gnn} and training \acrshort{nn}s with \acrshort{ngd} have been active areas of research in recent years but, to the best of our knowledge, this is the first attempt on using \acrshort{ngd} in the semi-supervised learning.
In this work, a new framework for optimizing \acrshortpl{gnn} is proposed that takes into account the unlabeled samples in the approximation of Fisher. 
Section \ref{sec:bg} provides an overview of related topics such as semi-supervised learning, \acrshort{gnn}, and \acrshort{ngd}. 
The proposed algorithm is described in section \ref{sec:method} and a series of experiments are performed in section \ref{sec:exp} to evaluate the method's efficiency and sensitivity to hyper-parameters.
Finally, the work is concluded in section \ref{sec:conc}.

\section{Problem and Background} \label{sec:bg}
In this section, first, the graph-based semi-supervised learning with a focus on least-squared regression and cross-entropy classification is defined.
Required backgrounds on the optimization and neural networks are provided in the subsequent sections.
A detailed description of the notation is summarized in the Table~\ref{tab:notation}.

\begin{table}[htbp]
\caption{Notation}
\begin{center}
\begin{tabular}{c l}
Symbol & Description \\
\hline
$x, \bmx, X$ & Scalar, vector, matrix \\
$\epsilon, \lambda, \gamma$ & Regularization hyper-parameters \\
$\eta$ & The learning rate \\ 
$A$ & Adjacency matrix \\
$X^{\sfT}$ & Matrix transpose \\
$I$ & Comfortable identity matrix \\ 
$\underline{\bmx}$ & A sequence of $\bmx$ vectors \\
$n$ & The total number of samples \\
$\bar{n}$ & The number of labeled samples \\
$F$ & Fisher information matrix \\
$B$ & Preconditioning matrix \\
$r(\bmtheta)$ & The cost of parameters $\bmtheta$ \\
$l(\bmy, \hat{\bmy})$ & The loss between $\bmy$ and $\hat{\bmy}$ \\
$q(\bmx)$ & The source distribution \\
$q(\bmy|\bmx)$ & The target distribution \\
$q(a|\bmx, \bmx')$ & The adjacency distribution \\
$p(\bmy|f(X, A; \bmtheta))$ & The prediction distribution \\
$\phi(\cdot)$ & An element-wise nonliear function \\
$\nabla_{\bmtheta}f$ & Gradient of scalar $f$ wrt. $\bmtheta$ \\
$J_{\bmtheta} \bmf$ & Jacobian of vector $\bmf$ wrt. $\bmtheta$ \\
$H_{\bmtheta}f$ & Hessian of scalar $f$ wrt. $\bmtheta$ \\
$\odot$ & Element-wise multiplication operation \\
\end{tabular}
\label{tab:notation}
\end{center}
\end{table}

\subsection{Problem}
Consider an information source $q(\bmx)$ generating independent samples $\bmx_i \in \mbX$, the target distribution $q(\bmy|\bmx)$ associating $\bmy_i \in \mbY$ to each $\bmx_i$, and the adjacency distribution $q(a| \bmx, \bmx')$ representing the link between two nodes given their covariates levels $\bmx$ and  $\bmx'$. 
%In a symmetric graph, the link would be the similarity, i.e. $q(a|\bmx, \bmx')=q(a|\bmx', \bmx)$.
The problem of learning $q(\bmy|\bmx)$ is to estimate some parameters $\bmtheta$ that minimizes the cost function 
\begin{equation} \label{eq:loss1}
    r(\bmtheta) = E_{\bmx, \underline{\bmx}' \sim q(\bmx), \underline{a} \sim q(a|\bmx,\bmx'), \bmy \sim q(\bmy|\bmx)}[l(\bmy, \bmf(\bmx, \underline{\bmx}', \underline{a}; \bmtheta))]
\end{equation}
where the loss function $l(\bmy, \hat{\bmy})$ measures the prediction error between $\bmy$ and $\hat{\bmy}$.
Also, $\underline{\bmx'}$ and $\underline{a}$ show sequences of $\bmx'$ and $a$, respectively.
As $q(\bmx)$, $q(a| \bmx, \bmx')$, and $q(\bmy|\bmx)$ are usually unknown or unavailable, the cost $r(\bmtheta)$ is estimated using samples from these distributions.
Furthermore, it is often more expensive to sample from $q(\bmy|\bmx)$ than $q(\bmx)$ and $q(a| \bmx, \bmx')$ resulting in the different number of samples from each distribution being available.

Let $X=X_0$ to be a $d_0 \times n$ matrix of $n \ge 1$ i.i.d $\bmx_i$ samples from $q(\bmx)$ (equivalent to $X \sim q(X)$).
It is assumed that $n \times n$ adjacency matrix $A=[a_{ij}]$ is sampled from $q(a| \bmx_i, \bmx_j)$ for $i, j = 1, \dots, n$ (equivalent to $A \sim q(A|X)$).
One can consider $(X, A)$ to be a graph of $n$ nodes in which the $i$th column of $X$ shows the covariate at the node $i$ and $D=\text{diag}(\sum_j a_{ij})$ denotes the diagonal degree matrix.
Also, denote $Y$ to be a $d_m \times \bar{n}$ matrix of $\bar{n} < n$ samples $\bmy_i$ from $q(\bmy|\bmx_i)$ for $i=1, \dots, \bar{n}$ and $\bmz = [\mathbbm{1}(i \in \{ 1, \dots, \bar{n} \})]_{i=1}^n$ to be the training mask vector.
Note that $\mathbbm{1}(\text{condition})$ is $1$ if the condition is true and $0$ otherwise.
Thus, an empirical cost can be estimated by 
\begin{equation}
    \hat{r}(\bmtheta) = \frac{1}{\bar{n}} \sum_{i=1}^{\bar{n}} l(\bmy_i, \bmf(\bmx_i, X, A; \bmtheta)),
\end{equation}
where $\bmf(\bmx_i, X, A; \bmtheta)$ shows the processed $\bmx_i$ when having access to $n-1$ extra samples and links between them.
Note that as $X$ contains $\bmx_i$ (the $i$th column), $\bmf(\bmx_i, X, A; \bmtheta)$ and $\bmf(X, A; \bmtheta)$ are used interchangeably. 

Assuming $p(\bmy|\bmf(X, A; \bmtheta))=p_{\bmtheta}(\bmy|X, A)$ to be an exponential family with natural parameters in $\mbF$, the loss function becomes
\begin{equation} \label{eq:loss}
    l(\bmy, \bmf(X, A; \bmtheta)) = - \log p(\bmy|\bmf(X, A;\bmtheta)).
\end{equation}
In the least-squared regression, 
\begin{equation}
    p(\bmy|\bmf(X, A; \bmtheta)) = \mcN(\bmy|\bmf(X, A; \bmtheta), \sigma^2)
\end{equation}
for fixed $\sigma^2$ and $\mbF = \mbY = \mbR$. 
In the cross-entropy classification to $c$ classes, 
\begin{equation}\label{eq:softmax}
    p(y=k|\bmf(X, A; \bmtheta)) = \exp({\bmf_k}) / \sum_{j=1}^c \exp({\bmf_j})
\end{equation}
for $\mbF = \mbR^c$ and $\mbY = \{1, \dots, c\}$. 

% optimization
\subsection{Parameter estimation}
Having the first order approximation of $r(\bmtheta)$ around a point $\bmtheta_0$, 
\begin{equation}
    r(\bmtheta) \approx r(\bmtheta_0) + \bmg_0^{\sfT} (\bmtheta - \bmtheta_0),
\end{equation}
the gradient descent can be used to update parameter $\bmtheta$ iteratively:
\begin{equation} \label{eq:ufo}
    \bmtheta_{t+1} = \bmtheta_t - \eta B \bmg_0
\end{equation}
where $\eta > 0$ denotes the learning rate, $\bmg_0 = \bmg(\bmtheta_0)$ is the gradient at $\bmtheta_0$ for
\begin{equation}
    \bmg(\bmtheta) = \frac{\partial r(\bmtheta)}{\partial \bmtheta}
\end{equation}
and $B$ shows a symmetric positive definite matrix called \textit{preconditioner} capturing the interplay between the elements of $\bmtheta$.
In SGD, $B = I$ and $\bmg_0$ is approximated by:
\begin{equation}
    \hat{\bmg}_0 = \frac{1}{\bar{n}} \sum_{i=1}^{\bar{n}} \frac{\partial l(y_i, \bmf(X, A; \bmtheta))}{\partial \bmtheta}
\end{equation}
where $\bar{n} \ge 1$ can be the mini-batch (a randomly drawn subset of the dataset) size.

To take into the account the relation between $\bmtheta$ elements, one can use the second order approximation of $r(\bmtheta)$:
\begin{equation}
    r(\bmtheta) \approx r(\bmtheta_0) + \bmg_0^{\sfT} (\bmtheta - \bmtheta_0) + \frac{1}{2}(\bmtheta - \bmtheta_0)^{\sfT}H_0(\bmtheta - \bmtheta_0),
\end{equation}
where $H_0 = H(\bmtheta_0)$ denotes the Hessian matrix at $\bmtheta_0$ for
\begin{equation}
    H(\bmtheta) = \frac{\partial^2 r(\bmtheta)}{\partial \bmtheta^{\sfT}\bmtheta}.
\end{equation}
Thus, having the gradients of $r(\bmtheta)$ around $\bmtheta$ as:
\begin{equation}
    \bmg(\bmtheta) \approx \bmg_0 + H_0(\bmtheta - \bmtheta_0),
\end{equation}
the parameters can be updated using:
\begin{equation} \label{eq:uso}
    \bmtheta_{t+1} = (I - \eta B H_0)\bmtheta_t - \eta B(\bmg_0 - H_0\bmtheta_0).
\end{equation}
The convergence of Eq.~\ref{eq:uso} heavily depends on the selection of $\eta$ and the distribution of $I-\eta B H_0$ eigenvalues.
Note that update rules Eqs.~\ref{eq:ufo} and~\ref{eq:uso} coincides at $B=H_0^{-1}$ resulting the Newton's method.
As it is not always possible or desirable to obtain Hessian, several preconditioners are suggested to adapt the information geometry of the parameter space. 

In NGD, the preconditioner is defined to be the inverse of Fisher Information matrix:
\begin{align} \label{eq:ngd}
    F(\bmtheta) := & E_{\bmx,\bmy \sim p(\bmx, \bmy; \bmtheta)}[\nabla_{\bmtheta} \nabla_{\bmtheta}^{\sfT}] \\
    = &  E_{\bmx \sim q(\bmx),\bmy \sim p(\bmy|\bmx; \bmtheta)}[\nabla_{\bmtheta} \nabla_{\bmtheta}^{\sfT}]
\end{align}
where $p(\bmx, \bmy; \bmtheta) := q(\bmx)p(\bmy|\bmx;\bmtheta)$ and 
\begin{equation}
    \nabla_{\bmtheta} := - \nabla_{\bmtheta} \log p(\bmx, \bmy; \bmtheta).
\end{equation}

\subsection{Neural Networks}
A neural network is a mapping from the input space $\mbX$ to the output space $\mbF$ through a series of $m$ layers. 
Layer $k \in \{1, \dots, m\}$, projects $d_{k-1}$-dimensional input $\bmx_{k-1}$ to $d_{k}$-dimensional output $\bmx_{k}$ and can be expressed as:
\begin{equation}\label{eq:layer}
    \bmx_{k} = \phi_{k}(W_k \bmx_{k-1})
\end{equation}
where $\phi_k$ is an element-wise non-linear function and $W_k$ is the $d_k \times d_{k-1}$-dimensional weight matrix.
The bias is not explicitly mentioned as it could be the last column of $W_k$ when $\bmx_k$ has an extra unit element.
Let the $\bmtheta = [\bmtheta_1, \dots, \bmtheta_m]$ to be the parameters of an $m$-layer neural network formed by stacking $m$ vectors of dimension $d_{k}d_{k-1}$ for $k=1, \dots, m$ and $\text{dim}(\bmx) = d_0$ such that $\text{dim}(\bmtheta) = \sum_{k=1}^m d_{k} d_{k-1}$.
The parameters of the $k$'th layer, $\bmtheta_k = \text{vec}(W_k)$ for $\text{vec}(W_k) = [\bmw_1, \dots, \bmw_{d_k}]$, is also shaped by piling up rows of $W_k$. 
Their gradients, $\nabla_{\bmtheta_k}$, could be written as: 
\begin{equation}
    \nabla_{\bmtheta_k} = \frac{\partial l}{\partial \bmtheta_k} = \frac{\partial \bmx_k}{\partial \bmtheta_k}^{\sfT} \frac{\partial l}{\partial \bmx_k}
\end{equation}
for $d_k \times d_kd_{k-1}$-dimensional matrix $\partial \bmx_k/ \partial \bmtheta_k$ and $d_k$-dimensional vector $\partial l/\partial \bmx_k$.

\subsection{Graph Neural Networks}
The \acrfull{gnn} extends the \acrshort{nn} mapping to the data represented in graph domains \cite{scarselli2008graph}.
The basic idea is to use related samples when the adjacency information is available.
In other words, the input to the $k$'th layer, $\bmx_{k-1}$ is transformed into $\tilde{\bmx}_{k-1}$ that take into the account unlabeled samples using the adjacency such that $p(\bmx_{k-1}, A) = p(\tilde{\bmx}_{k-1})$.
Therefore, for each node $i=1, \dots, n$, the Eq.~\ref{eq:layer} can be written by a local transition function (or a single message passing step) as:
\begin{equation} \label{eq:layer2}
    \bmx_{k, i} = \bmf_{k}(\bmx_{k-1, i}, \underline{\bmx}_{k-1, i}, \bmx_{0, i}, \underline{\bmx}_{0, i}; W_k)
\end{equation}
where $\underline{\bmx}_{k, i}$ denotes all the information coming from nodes connected to the $i$th node at the $k$th layer.
The subscripts here are used to indicate both the layer and the node, i.e. $\bmx_{k,i}$ means the state embedding of node $i$ in the layer $k$.
Also, the local transition Eq.~\ref{eq:layer2}, parameterized by $W_k$, is shared by all nodes that includes the information of the graph structure, and $\bmx_{0,i} = \bmx_{i}$.

The \acrfull{gcn} is a one of the \acrshort{gnn} with the message passing operation as a linear approximation to spectral graph convolution, followed by a non-linear activation function as:
\begin{align} \label{eq:layer3}
    \bmx_{k, i} = & \bmf_{k}(\bmx_{k-1, i}, \underline{\bmx}_{k-1, i}; W_k) \\
    X_{k} = & \phi_k(W_kX_{k-1}\tilde{A}) \\
    = & \phi_k(W_k\tilde{X}_{k-1})
\end{align}
where $\phi_k$ is a element-wise nonlinear activation function such as $\text{RELU}(x) = \text{max}(x, 0)$, $W_k$ is a $d_{k}\times d_{k-1}$ parameter matrix that needs to be estimated.
$\tilde{A}$ denotes the normalized adjacency matrix defined by:
\begin{equation}
    \tilde{A} = (D+I)^{-1/2}(A+I)(D+I)^{-1/2}
\end{equation}
to overcome the overfitting issue due to the small number of labeled samples $\bar{n}$.
In fact, a \acrshort{gcn} layer is basically a \acrshort{nn} (Eq.~\ref{eq:layer}) where the input $\bmx_{k-1}$ is initially updated into $\tilde{\bmx}_{k-1}$ using a so-called renormalization trick such that $\tilde{\bmx}_{k-1, i} = \sum_{j=1}^n \tilde{a}_{i,j} \bmx_{k-1, i}$ where $\tilde{A} = [\tilde{a}_{i,j}]$.
Comparing Eq.~\ref{eq:layer3} with the more general Eq.~\ref{eq:layer2}, the local transition function $\bmf_k$ is defined as a linear combination followed by a nonlinear activation function.
For classifying $\bmx$ into $c$ classes, having a $c$-dimensional $\bmx_m$ as the output of the last layer with a Softmax activation function, the loss between the label $\bmy$ and the prediction $\bmx_m$ becomes:
\begin{equation}
    l(\bmy, \bmx_m) = - \sum_{j=1}^c \mathbbm{1}(\bmx_{m, j} = j) \log \bmx_{m, j}.
\end{equation}

\section{Method} \label{sec:method}
The basic idea of preconditioning is to capture the relation between the gradients of parameters $\nabla_{\bmtheta}$.
This relation can be as complete as a matrix $B$ (for example, \acrshort{ngd}) representing the pairwise relation between elements of $\nabla_{\bmtheta}$ or as simple as a weighting vector (for example, Adam) with the same size as $\nabla_{\bmtheta}$ resulting in a diagonal $B$.
Considering the flow of gradients $\nabla_{\bmtheta, t}$ over the training time as input features, the goal of preconditioning is to extract useful features that help with the updating rule.
One can consider the preconditioner to be the expected value of $B(\bmx, \bmy) = [b_{ij}]^{-1}$ for 
\begin{equation}
    b_{ij} = b_{i, j}(\bmx, \bmy) = b({\nabla_{\theta}}_i || {\nabla_{\theta}}_j) \footnote{Note that the adjacency matrix provides the relation between $\bmx$ samples where the preconditioning matrix includes the relation between the elements of $\nabla_{\bmtheta}$}.
\end{equation}
In methods with a diagonal preconditioner like Adam, $B(\bmx, \bmy) = \text{diag}(\nabla_{\bmtheta} \odot \nabla_{\bmtheta})$, the pair-wise relation between gradients is neglected. 
Preconditioners like Hessian inverse in Newton's method with the form of $b_{ij} = \partial {\nabla_{\bmtheta}}_i/\partial \theta_j$ are based on the second derivative that encodes the cost curvature in the parameter space. In NGD and similar methods, this curvature is approximated using the second moment of gradient $b_{ij} = {\nabla_{\bmtheta}}_i {\nabla_{\bmtheta}}_j$, as an approximation of Hessian, in some empirical cases (see \cite{kunstner2019limitations} for a detailed discussion).

In this section, a new preconditioning algorithm, motivated by natural gradient, is proposed for graph-based semi-supervised learning that improves the convergence of Adam and SGD with intuitive and insensitive hyper-parameters.
The natural gradient is a concept from information geometry and stands for the steepest descent direction in the Riemannian manifold of probability distributions \cite{amari1998natural}, where the distance in the distribution space is measured with a special \textit{Riemannian metric}.
This metric depends only on the properties of the distributions themselves and not their parameters, and in particular, it approximates the square root of the KL divergence within a small neighborhood \cite{martens2014new}. 
Instead of measuring the distance between the parameters $\bmtheta$ and $\bmtheta'$, the cost is measured by the KL divergence between their distributions $p(\bmtheta)$ and $p(\bmtheta')$.
Consequently, the steepest descent direction in the statistical manifold is the negative gradient preconditioned with the Fisher information matrix $F(\bmtheta)$.
The validation cost on three different datasets is shown in Fig.~\ref{fig:val} where preconditioning is applied to both Adam and SGD. 

\begin{figure*}[htbp]
    \centering
    \subfloat[Adam - CiteSeer\label{fig:val1-citeseer}]{
    \includegraphics[width=.3\textwidth]{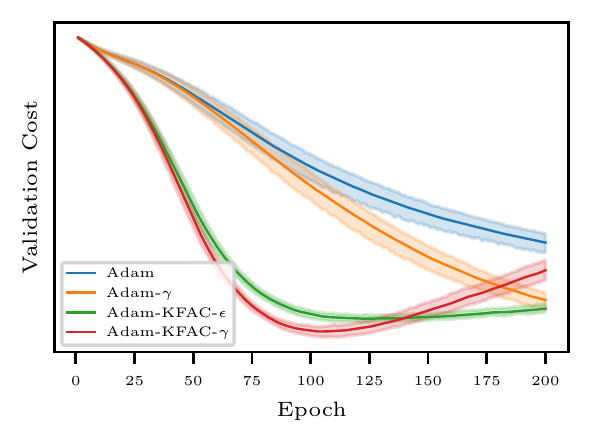}}
    \subfloat[Adam - Cora\label{fig:val1-cora}]{%
    \includegraphics[width=.3\textwidth]{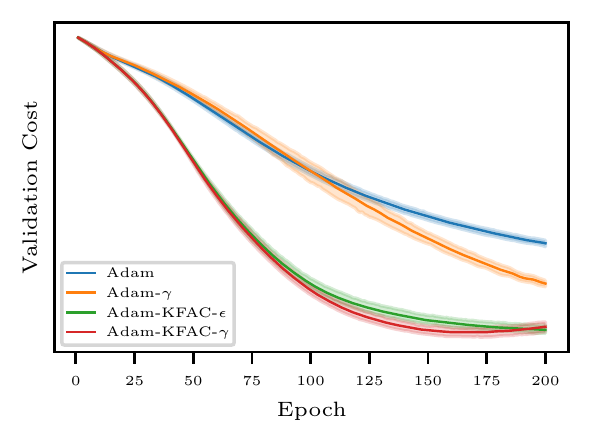}}
    \subfloat[Adam - PubMed\label{fig:val1-pubmed}]{%
    \includegraphics[width=.3\textwidth]{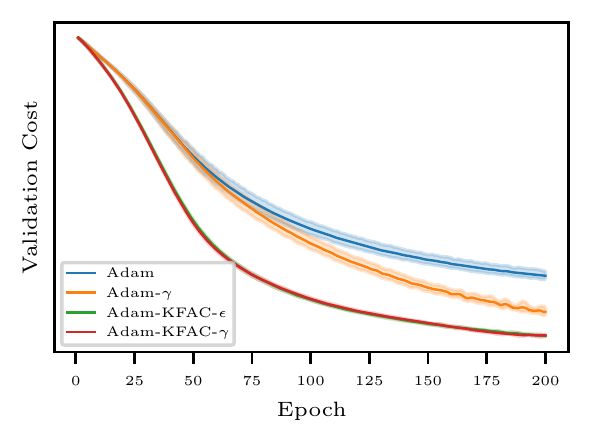}} \\
    \subfloat[SGD - CiteSeer\label{fig:val2-citeseer}]{
    \includegraphics[width=.3\textwidth]{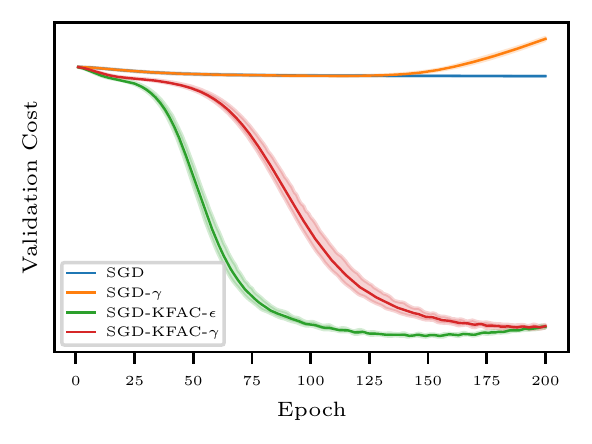}}
    \subfloat[SGD - Cora\label{fig:val2-cora}]{%
    \includegraphics[width=.3\textwidth]{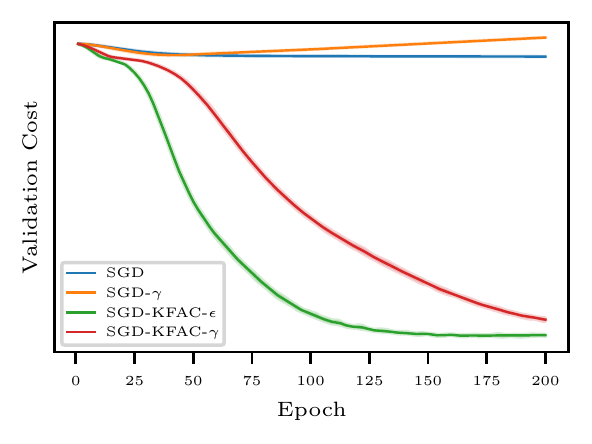}}
    \subfloat[SGD - PubMed\label{fig:val2-pubmed}]{%
    \includegraphics[width=.3\textwidth]{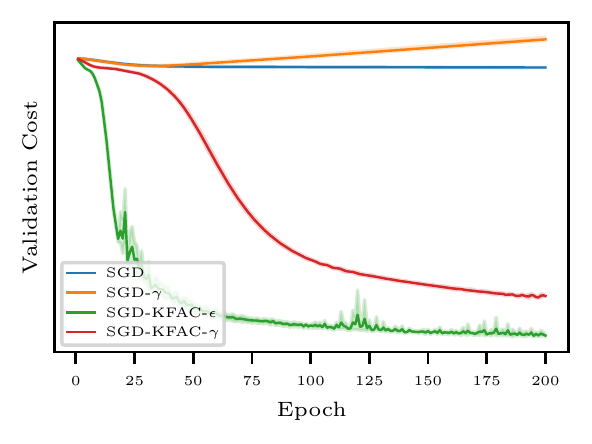}} \\
    \caption{The validation costs of four optimization methods on the second split of Citation datasets over $10$ runs. 
    A $2$-layer GCN with a $64$-dimensional hidden variable is used in all experiments. 
    As shown in Fig.~\ref{fig:val1-citeseer}, ~\ref{fig:val1-cora}, and ~\ref{fig:val1-pubmed} (upper row), the proposed Adam-KDAC methods (green and red curves) outperform vanilla Adam methods (blue and orange curves) on all three datasets. 
    Also, Fig.~\ref{fig:val2-citeseer}, ~\ref{fig:val2-cora}, and ~\ref{fig:val2-pubmed} (bottom row) reveal that the suggested SGD-KFAC methods (green and red curves) achieve a remarkably faster convergence than the vanilla SGD method (blue and orange curves) on all three datasets.}
  \label{fig:val} 
\end{figure*}

As the original NGD (Eq.~\ref{eq:ngd}) is defined based on a prediction function with access only to a single sample, $p(\bmy|\bmf(\bmx; \bmtheta))$, Fisher information matrix with the presence of the adjacency distribution becomes:
\begin{align} \label{eq:fisher2}
    F(\bmtheta) & = E_{\bmx, \underline{\bmx}', \underline{a}, \bmy \sim p(\bmx, \underline{\bmx}', \underline{a}, \bmy; \bmtheta)}[\nabla_{\bmtheta} \nabla_{\bmtheta}^{\sfT}] \\
    & = E_{\bmx, \underline{\bmx}' \sim q(\bmx), \underline{a} \sim q(a|\bmx, \bmx'), \bmy \sim p(\bmy|\bmx, \underline{\bmx}', \underline{a}; \bmtheta)}[\nabla_{\bmtheta} \nabla_{\bmtheta}^{\sfT}].
\end{align}
With $n$ samples of $q(\bmx)$, i.e. $X$ and $n^2$ samples of $q(a|X)$, i.e. $A$, Fisher can be estimated as: 
\begin{equation}\label{eq:fisher3}
    \hat{F}(\bmtheta) = E_{\bmy \sim p(\bmy|X, A; \bmtheta)}[\nabla_{\bmtheta} \nabla_{\bmtheta}^{\sfT}],
\end{equation} 
where 
\begin{equation}
    \nabla_{\bmtheta} = - \nabla_{\bmtheta} \log p(X, A, \bmy; \bmtheta).
\end{equation}
In fact, to evaluate the expectation in Eq.~\ref{eq:fisher2}, $q(X)$ and $q(A|X)$ are approximated with $\hat{q}(X)$ and $\hat{q}(A|X)$ using $\{\bmx_i\}_{i=1}^n$ and $A$, respectively.
However, there are only $\bar{n}$ samples from $\hat{q}(\bmy|\bmx_j)$ as an approximation of $q(\bmy|\bmx_j)$ for the following replacement: 
\begin{equation}
    p(\bmy|X, A;\bmtheta) \approx \hat{q}(\bmy|\bmx_i).
\end{equation}
Therefore, an empirical Fisher can be obtained by
\begin{equation} \label{eq:empfisher}
    \hat{F}(\bmtheta) = \frac{1}{\bar{n}} \sum_{i=1}^{\bar{n}} \nabla_{\bmtheta, i} \nabla_{\bmtheta, i}^{\sfT} = \sum_{i=1}^{\bar{n}} B_i(\bmtheta)
\end{equation}
for 
\begin{align}
    \nabla_{\bmtheta, i} & = - \nabla_{\bmtheta} \log p(\bmy_i|X, A; \bmtheta) \\
    B_i(\bmtheta) & = B(X, A, \bmy_i; \bmtheta).
\end{align}

From the computation perspective, the matrix $B_i(\bmtheta)$ can be very large, for example, in neural networks with multiple layers, the parameters could be huge, so it needs to be approximated too.
In networks characterized with Eqs.~\ref{eq:layer} or \ref{eq:layer3}, a simple solution would be ignoring the cross-layer terms so that $B_i(\bmtheta)^{-1}$ and consequently $B_i(\bmtheta)$ turns into a block-diagonal matrix:
\begin{equation}
    B_i(\bmtheta) = \text{diag}(B_{1, i}, \dots, B_{m, i})
\end{equation}
In \acrshort{kfac}, the diagonal block $B_{k, i}$, corresponded to $k$'th layer with the dimension $d_{k}d_{k-1} \times d_{k}d_{k-1}$, is approximated with the Kronecker product of the inverse of two smaller matrices $U_{k, i}$ and $V_{k, i}$ as:
\begin{equation}
    B_{k, i} = (U_{k, i} \otimes V_{k, i})^{-1} = U_{k, i}^{-1} \otimes V_{k, i}^{-1}.
\end{equation}
For $\nabla_{\bmtheta, i} = [\nabla_{\bmtheta_1, i}^{\sfT}, \dots, \nabla_{\bmtheta_m, i}^{\sfT}]^{\sfT}$, the preconditioned gradient $B_{k, i} \nabla_{\bmtheta_k, i}$ can be computed using the identity
\begin{align}
    B_{k, i} \nabla_{\bmtheta_k, i} & = U_{k, i}^{-1} \otimes V_{k, i}^{-1} \text{vec}(\frac{\partial l}{\partial W_k}) \\
    & = \text{vec}(U_{k, i}^{-1} \frac{\partial l}{\partial W_k} V_{k, i}^{-1}).
\end{align}
Noting that:
\begin{align}
    \frac{\partial l}{\partial W_k} &= \left(\frac{\partial l}{\partial \bmx_k} \odot \phi_k'(W_k \tilde{\bmx}_{k-1}) \right) \tilde{\bmx}_{k-1}^{\sfT} \\
     &=\bmu_{k, i} \bmv_{k, i}^{\sfT},
\end{align}
$U_{k}$ and $V_{k}$ blocks are approximated with the expected values of $\bmu_{k, i}\bmu_{k, i}^{\sfT}$ and $\bmv_{k, i}\bmv_{k, i}^{\sfT}$ respectively where $\text{dim}(\bmu_k) = d_k$, $\text{dim}(\bmv_k) = d_{k-1}$.
Finally, $U_{k}^{-1}$ and $V_{k}^{-1}$ are evaluated by taking inverses of $U_{k} + \epsilon^{-1/2}$ and $V_{k} + \epsilon^{-1/2}$ for $\epsilon$ being the regularization hyper-parameter.

For a graph with $n$ nodes, adjacency matrix $A$, and the training set $\{ (\bmx_i, \bmy_i) \}_{i=1}^{\bar{n}} + \{ \bmx_i\}_{i=\bar{n}+1}^n$, $U_{k}$ and $V_{k}$ are estimated in two ways: (1) using only $\bar{n}$ labeled samples, and (2) including $n-\bar{n}$ unlabeled samples. 
In the first method, $U_k$ and $V_k$ are estimated by:
\begin{gather}
    U_k = \frac{1}{\bar{n}} \left(\frac{\partial l}{\partial X_k} \odot \phi_k'(W_k \tilde{X}_{k-1}) \right) \left(\frac{\partial l}{\partial X_k} \odot \phi_k'(W_k \tilde{X}_{k-1}) \right)^{\sfT} \\
    V_k = \frac{1}{\bar{n}} \tilde{X}_{k-1} \tilde{X}_{k-1}^{\sfT}.
\end{gather}
Note that both $\partial l / \partial X_k$ and $\phi_k'(W_k \tilde{X}_{k-1})$ are $d_{k} \times n$ matrices and the last $n-\bar{n}$ columns of $\partial l / \partial X_k$ are zero. 
However, as unlabeled samples are not used in the first method, one needs to evaluate loss function for $i=\bar{n}+1, \dots, n$, which  can be done by sampling $\hat{\bmy}_i$ from $p(\bmy|\bmx;\bmtheta)$.
In the second method, these new samples are added to the empirical cost as
\begin{align}
    \hat{r}(\bmtheta) = & \frac{1}{\bar{n}} \sum_{i=1}^{\bar{n}} l(\bmy_i, f(X, A; \bmtheta)) \nonumber \\
    & + \frac{\lambda}{n-\bar{n}} \sum_{i=\bar{n}+1}^{n} l(\bmy_i, f(X, A; \bmtheta)),
\end{align}
where $0 \leq \lambda \leq 1$ denotes the regularization hyper-parameter for controlling the cost of predicted labels and $\lambda=0$ results the first method.
As the prediction improves over the course of training, $\lambda$ can be a function of iteration $t$, for example here, it is defined to be: 
\begin{equation}\label{eq:lambda}
    \lambda(t) := \left(\frac{t}{\text{max}(t)}\right)^{\gamma},
\end{equation}
where $\text{max}(t)$ shows the maximum number of iterations and $\gamma$ is the replaced regularization hyper-parameter.
Algorithm~\ref{alg:alg} shows the preconditioning step for modifying gradients of each layer at any iteration such that gradients are first, transformed using two matrices of $V_k^{-1}$ and $U_k^{-1}$, then sent to the optimization algorithm for updating parameters.

\begin{algorithm}[htbp]
\caption{Semi-Supervised Preconditioning}\label{alg:alg}
\begin{algorithmic}[0]
    \Require $\nabla W_k$ \Comment{Gradient of parameters for $k=1, \dots, m$}
    \Require $A$ \Comment{Adjacency matrix}
    \Require $D$ \Comment{Degree matrix}
    \Require $\bmz$ \Comment{Training mask vector}
    \Require $\epsilon$, $\lambda$ \Comment{Regularization hyper-parameters}
    \Statex
    \State $n = \text{dim}(\bmz)$
    \State $\bar{n} = \sum(\bmz)$
    \State $\tilde{A} = (D+I)^{-1/2}(A+I)(D+I)^{-1/2} = [\tilde{a}_{ij}]$
    \For{$k=1, \dots, m$}
    \State $\tilde{\bmx}_{k-1, i} = \sum_{j=1}^{n} \tilde{a}_{i, j} \bmx_{k-1, j}$
    \State $\bmu_{k-1, i} = \partial l /\partial \bmx_k \odot \phi_k'(W_k \tilde{\bmx}_{k-1, i})$
    \State $\bmv_{k-1, i} = \tilde{\bmx}_{k-1, i}$
    \State $U_k = \sum_{i=1}^{n} (z_i + (1-z_i)\lambda) \bmu_{k-1, i}\bmu_{k-1, i}^{\sfT} / (n+\lambda \bar{n})$
    \State $V_k = \sum_{i=1}^{n} (z_i + (1-z_i)\lambda) \bmv_{k, i}\bmv_{k, i}^{\sfT} / (n+\lambda \bar{n})$
    \State $U_k^{-1} = \Call{Inverse}{U_k}$
    \State $V_k^{-1} = \Call{Inverse}{V_k}$
    \State \textbf{output} $V_k^{-1} \nabla W_k U_k^{-1}$ 
    \EndFor
    
    \Statex
    \Function{Inverse}{$X$}
    \State \textbf{output} $(X + \epsilon^{-1/2}I)^{-1}$
    \EndFunction
\end{algorithmic}
\end{algorithm}

\subsection{Relation between Fisher and Hessian}
The Hessian of the cost function:
\begin{equation}
    H_{\bmtheta}r(\bmtheta) = E_{X, A, \bmy \sim p(X, A, \bmy; \bmtheta)}[ H_{\bmtheta} l(\bmy, f(X, A; \bmtheta))]
\end{equation}
can also be approximated using $\hat{q}(X)$, $\hat{q}(A|X)$, and $\hat{q}(\bmy|\bmx_i)$ resulting the empirical Hessian to be 
\begin{equation}
    \hat{H}_{\bmtheta}r(\bmtheta) := \frac{1}{\bar{n}} \sum_{i=1}^{\bar{n}} H_{\bmtheta} l(\bmy_i, f(X, A; \bmtheta)),
\end{equation}
which is equivalent to the empirical Fisher Eq.~\ref{eq:empfisher} when $p(X, A, \bmy; \bmtheta)$ is estimated with $\hat{q}(X)\hat{q}(A|X)\hat{q}(\bmy|\bmx_i)$ for $i=1, \dots, \bar{n}$ (see Lemma~\ref{lemma:1} in the appendix).

\section{Experiments} \label{sec:exp}
In this section, the performance of the proposed algorithm is evaluated compared to Adam and SGD on several datasets for the task of node classification in single graphs. 
The task is assumed to be transductive when all the features are available for training but only a portion of labels are used in the training.
First, a detailed description of datasets and the model architecture are provided.
Then, the general optimization setup, commonly used for the node classification, is specified.
The last part includes the sensitivity to hyper-parameter and training time analysis in addition to validation cost convergence and the test accuracy.
All the experiments are conducted mainly using Pytorch \cite{paszke2019pytorch} and Pytorch Geometric \cite{fey2019fast}, two open-source Python libraries for automating differentiation and working with graph datasets.

\subsection{Datasets}
Three citation datasets with the statistics shown in Table~\ref{tab:datasets} are used in the experiments \cite{sen2008collective}.
Cora, CiteSeer, and PubMed are single graphs in which nodes and edges correspond to documents and citation links, respectively.
A sparse feature vector (document keywords) and a class label are associated with each node.
Several splits of these datasets are used in the node classification task.
The first split, $20$ instances are randomly selected for training, $500$ for validation, and $1000$ for the test; the rest of the labels are not used \cite{yang2016revisiting}.
In the second split, all nodes except $500+1000$ validation and test nodes are used for the training \cite{chen2018fastgcn}.
To evaluate the overfitting behavior, the third split exploits all labels for training excluding $500+500$ nodes for the validation and test \cite{levie2018cayleynets}. 

\begin{table}[htbp]
    \centering
    \caption{Citation network datasets statistics}
    \begin{tabular}{lcccc}
        Dataset & Nodes & Edges & Classes & Features\\
        \hline
        Citeseer & 3,327 & 4732 & 6 & 3,703\\
        Cora & 2,708 & 5,429 & 7 & 1,433\\
        Pubmed & 19,717 & 44,338 & 3 & 500
    \end{tabular}
    \label{tab:datasets}
\end{table}

\subsection{Architectures}
In the node classification using a \acrshort{nn} followed by Softmax function (Eq.~\ref{eq:softmax}), the class with maximum probability is chosen to be the predicted node label.
A $2$-layer GCN with a $64$-dimensional hidden variable is used for comparing different optimization methods. 
In the first layer, the activation function ReLU is followed by a dropout function with a rate of $0.5$. 
The loss function is evaluated as the negative log-likelihood of Softmax (Eq.~\ref{eq:softmax}) of the last layer.

\subsection{Optimization}
The weights of parameters are initialized like the original \acrshort{gcn} \cite{kipf2016semi} and input vectors are row-normalized accordingly \cite{glorot2010understanding}.
The model is trained for $200$ epochs without any early stopping and the learning rate of $0.01$.
The Adam and SGD are used with the weight decay of $5 \times 10^{-4}$ and the momentum of $0.9$, respectively.

\subsection{Results}
The optimization performance is measured by both the minimum validation cost and the test accuracy for the best validation cost.
% validation cost
The validation cost of training a $2$-layer GCN with a $64$-dimensional hidden variable is used for comparing optimization methods (Adam and SGD) with their preconditioned version (Adam-KFAC and SGD-KFAC).
For each method, unlabeled samples are used in the training process with a ratio controlled by $\gamma$.
Fig.~\ref{fig:val} shows the validation cost of four methods based on Adam (upper row) and SGD (bottom row) for all three Citation datasets. 
% test accuracy
The test accuracy of a $2$-layer GCN trained using four different methods on three split are shown in Tab.~\ref{tab:test-s1}, ~\ref{tab:test-s2}, and ~\ref{tab:test-s3}.
Reported values of test accuracy in tables are averages and $95\%$ confidence intervals over $10$ runs for the best hyper-parameters tuned on the second split of the CiteSeer dataset. 
Note that the test accuracy may not always reflect the performance of the optimization method as the objective function (cross-entropy) is not the same as the prediction function (argmax).
However, in most cases, the proposed method achieves better accuracy compared to Adam (the first row in all tables).
As a fixed learning rate $0.01$ is used in all methods, SGD has a very slow convergence and does not provide competitive results.

\begin{table}[htbp]
    \centering
    \caption{The test accuracy of four optimization methods on the first split of Citation datasets over $10$ runs. A $2$-layer GCN with a $64$-dimensional hidden variable is used in all experiments.}
    \begin{tabular}{lcccc}
         & CiteSeer & Cora & Pubmed\\
        \hline
        $\text{Adam}$ & $ 71.66 \pm  0.61$ & $ 81.20 \pm  0.25$ & $ 79.72 \pm  0.30$\\
        $\text{Adam}_{\gamma}$ & $\mathbf{74.28 \pm  0.67}$ & $ 82.42 \pm  0.33$ & $\mathbf{80.06 \pm  0.34}$\\
        $\text{Adam-KFAC}_{\epsilon}$ & $ 71.94 \pm  0.53$ & $ 81.68 \pm  0.25$ & $ 79.48 \pm  0.28$\\
        $\text{Adam-KFAC}_{\gamma}$ & $ 70.24 \pm  0.66$ & $\mathbf{82.84 \pm  0.87}$ & $ 76.94 \pm  0.59$\\
        \hline
        $\text{SGD}$ & $ 20.38 \pm  8.92$ & $ 23.14 \pm  5.17$ & $ 45.76 \pm  3.04$\\
        $\text{SGD}_{\gamma}$ & $ 17.64 \pm  6.18$ & $ 17.26 \pm  8.41$ & $ 46.20 \pm  4.35$\\
        $\text{SGD-KFAC}_{\epsilon}$ & $ 71.82 \pm  0.48$ & $\mathbf{82.06 \pm  0.34}$ & $ 77.20 \pm  0.63$\\
        $\text{SGD-KFAC}_{\gamma}$ & $\mathbf{73.52 \pm  0.22}$ & $ 81.70 \pm  0.79$ & $\mathbf{79.20 \pm  0.29}$\\
    \end{tabular}
    \label{tab:test-s1}
\end{table}

\begin{table}[htbp]
    \centering
    \caption{The test accuracy of four optimization methods on the second split of Citation datasets over $10$ runs. A $2$-layer GCN with a $64$-dimensional hidden variable is used in all experiments.}
    \begin{tabular}{lcccc}
         & CiteSeer & Cora & Pubmed\\
        \hline
        $\text{Adam}$ & $ 78.68 \pm  0.83$ & $ 87.36 \pm  0.47$ & $ 87.78 \pm  0.14$\\
        $\text{Adam}_{\gamma}$ & $ 77.98 \pm  0.39$ & $ 87.28 \pm  0.34$ & $ 87.52 \pm  0.30$\\
        $\text{Adam-KFAC}_{\epsilon}$ & $\mathbf{79.50 \pm  0.15}$ & $\mathbf{87.60 \pm  0.20}$ & $ \mathbf{88.46 \pm  0.24}$\\
        $\text{Adam-KFAC}_{\gamma}$ & $ 79.42 \pm  0.32$ & $ 86.60 \pm  0.30$ & $ 87.88 \pm  0.16$\\
        \hline
        $\text{SGD}$ & $ 20.80 \pm  2.12$ & $ 31.90 \pm  0.00$ & $ 43.22 \pm  1.42$\\
        $\text{SGD}_{\gamma}$ & $ 20.96 \pm  5.22$ & $ 31.90 \pm  0.00$ & $ 40.82 \pm  0.33$\\
        $\text{SGD-KFAC}_{\epsilon}$ & $\mathbf{79.48 \pm  0.40}$ & $\mathbf{87.54 \pm  0.43}$ & $ \mathbf{89.08 \pm  0.18}$\\
        $\text{SGD-KFAC}_{\gamma}$ & $ 77.32 \pm  0.27$ & $ 87.42 \pm  0.24$ & $ 88.18 \pm  0.30$\\
    \end{tabular}
    \label{tab:test-s2}
\end{table}

\begin{table}[htbp]
    \centering
    \caption{The test accuracy of four optimization methods on the third split of Citation datasets over $10$ runs. A $2$-layer GCN with a $64$-dimensional hidden variable is used in all experiments.}
    \begin{tabular}{lcccc}
         & CiteSeer & Cora & Pubmed\\
        \hline
        $\text{Adam}$ & $ 79.80 \pm  0.66$ & $ 89.44 \pm  0.41$ & $ 87.16 \pm  0.71$\\
        $\text{Adam}_{\gamma}$ & $ 79.64 \pm  0.32$ & $ 89.60 \pm  0.91$ & $ 87.44 \pm  0.27$\\
        $\text{Adam-KFAC}_{\epsilon}$ & $\mathbf{80.52 \pm  0.14}$ & $\mathbf{90.16 \pm  0.59}$ & $ \mathbf{87.84 \pm  0.21}$\\
        $\text{Adam-KFAC}_{\gamma}$ & $ 80.52 \pm  0.22$ & $ 89.24 \pm  0.64$ & $ 87.36 \pm  0.37$\\
        \hline
        $\text{SGD}$ & $ 15.04 \pm  1.70$ & $ 32.80 \pm  0.00$ & $ 41.96 \pm  0.44$\\
        $\text{SGD}_{\gamma}$ & $ 16.12 \pm  5.30$ & $ 32.80 \pm  0.00$ & $ 41.20 \pm  0.00$\\
        $\text{SGD-KFAC}_{\epsilon}$ & $\mathbf{79.76 \pm  0.75}$ & $\mathbf{89.88 \pm  0.14}$ & $ \mathbf{89.36 \pm  0.57}$\\
        $\text{SGD-KFAC}_{\gamma}$ & $ 78.52 \pm  0.28$ & $ 88.72 \pm  0.38$ & $ 87.88 \pm  0.80$\\
    \end{tabular}
    \label{tab:test-s3}
\end{table}

% hyper-param
The importance of hyper-parameters $\epsilon$, $\gamma$ are shown in Fig.~\ref{fig:hyperparams}.
Figures \ref{fig:adam-eps} and \ref{fig:sgd-eps} depict the sensitivity of Adam and SGD to the $\epsilon$ parameter, respectively.
As the inverse of $\epsilon$ directly affects the same factor as the learning rate $\eta$, the smaller the $\epsilon$, the faster the convergence. 
However, choosing very small $\epsilon$ results in larger confidence intervals which are not desirable.
The effect of $\gamma$ on Adam and SGD are depicted in figures~\ref{fig:adam-gamma} and~\ref{fig:sgd-gamma}, respectively. 
When using Adam, due to its faster convergence compared to SGD, smaller $\gamma$, i.e. using more predictions leads to much wider confidence intervals.
In other words, the training process dominated by more labels results in a more stable convergence with a smaller variance.
Thus, for a stable estimation, $\lambda$ or $\gamma$ must be tuned with respect to the optimization algorithm because of their sensitivity to the convergence rate.
Since the Fisher matrix does not change considerably at each iteration, an experiment is performed to explore the sensitivity of validation loss to the frequency of updating Fisher.
In Figures~\ref{fig:adam-update} and~\ref{fig:sgd-update}, the validation cost over time is evaluated for updating Fisher every $4, 8, \dots, 128$ iterations.
When Fisher is updated more frequently, its computation takes more time hence the training process lasts longer (having other hyper-parameters fixed). 
Increasing the update frequency does not affect the performance to some extent, however, it largely reduces the training time.
As updating Fisher every $50$ or $100$ iterations, does not affect the final validation cost to a great extent, to speed up the training process, Fisher is updated every $50$ epochs in all of the experiments.

\begin{figure*}[htbp]
    \centering
    \subfloat[Adam\label{fig:adam-eps}]{
    \includegraphics[width=.3\textwidth]{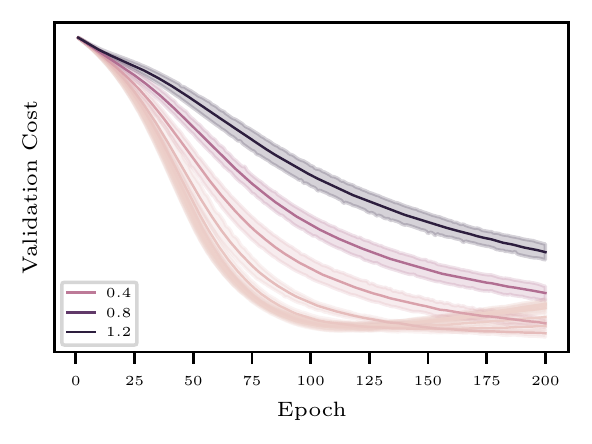}}
    \subfloat[Adam\label{fig:adam-gamma}]{
    \includegraphics[width=.3\textwidth]{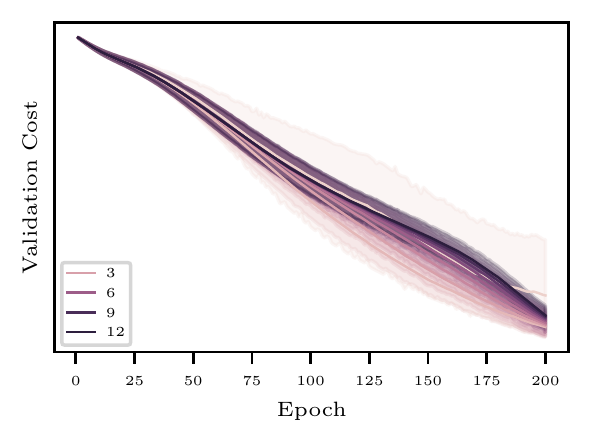}}
    \subfloat[Adam\label{fig:adam-update}]{
    \includegraphics[width=.3\textwidth]{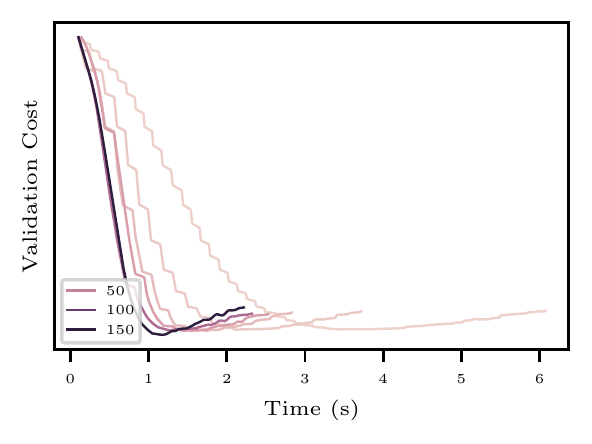}}\\
    \subfloat[SGD\label{fig:sgd-eps}]{%
    \includegraphics[width=.3\textwidth]{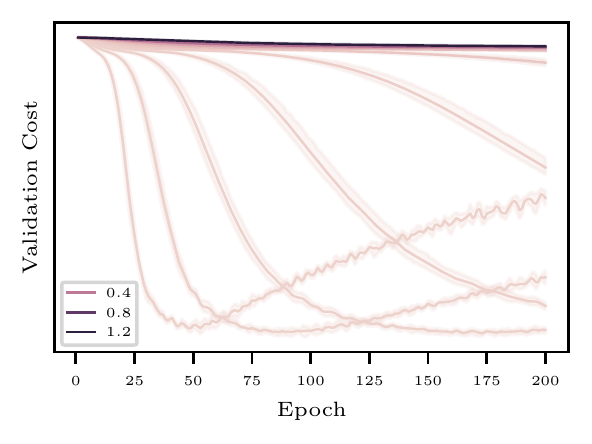}}
    \subfloat[SGD\label{fig:sgd-gamma}]{%
    \includegraphics[width=.3\textwidth]{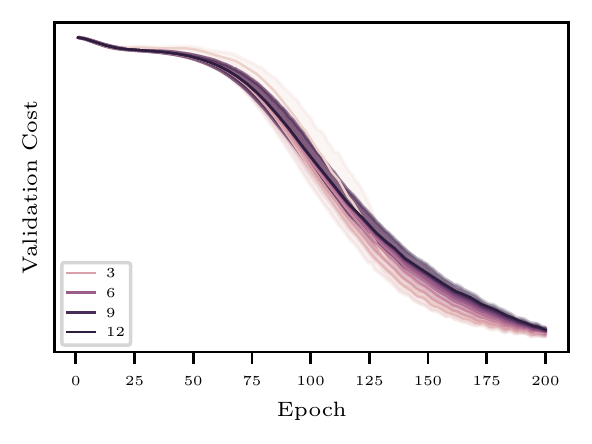}}
    \subfloat[SGD\label{fig:sgd-update}]{%
    \includegraphics[width=.3\textwidth]{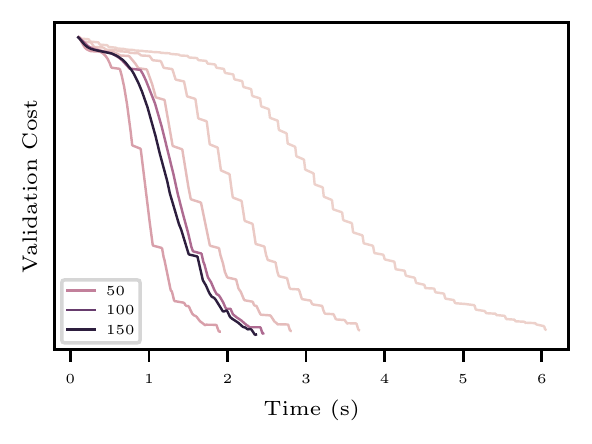}}
    \caption{The sensitivity of $\epsilon$, $\gamma$, and updating frequency on validation costs of Adam-KFAC (upper) and SGD-KFAC (below) when training on the second split of CiteSeer dataset over $10$ runs. 
    A $2$-layer GCN with a $64$-dimensional hidden variable is used in all experiments. 
    Fig.~\ref{fig:adam-eps} and~\ref{fig:sgd-eps} show that smaller $\epsilon$ results in a faster convergence with a probable cost of larger variance as it inversely scales the same factor as the learning rate.
    As depicted in Fig.~\ref{fig:adam-gamma} and~\ref{fig:sgd-gamma}, the larger the $\gamma$, the more stable the convergence (the more confined confidence intervals). 
    Finally, it can be seen in Fig.~\ref{fig:adam-update} and ~\ref{fig:sgd-update} that since performances are similar under different updating frequencies, selecting a relatively large frequency ($50$) can reduce the training time substantially.}
  \label{fig:hyperparams}
\end{figure*}

% training time
To examine the time complexity of the proposed method, the validation costs of Adam-KFAC and SGD-KFAC are compared with Adam and SGD when training on the second split of Citation datasets with respect to the training time for $200$ epochs (Fig.~\ref{fig:time}).
The training on Cora and PubMed (Fig.~\ref{fig:time-cora} and \ref{fig:time-pubmed}) takes a shorter time compared to the training on CitSeer (Fig.~\ref{fig:time-citeseer}) mainly because of the dimension of input features as it directly enlarges the size of the Fisher matrix. 
As shown in Fig.~\ref{fig:time}, the proposed SGD-KFAC method (red curve) converges much faster than the vanilla SGD as expected.
Surprisingly, SGD-KFAC outperforms Adam and even Adam-KFAC methods in all datasets implying that the naive SGD with a natural gradient preconditioner can lead to a faster convergence than Adam-based methods.
Another interesting observation is that Adam-based methods demonstrate similar performances in all experiments making them independent of the dataset while \acrshort{sgd}-based methods show different overfitting behavior.

\begin{figure*}[htbp]
    \centering
    \subfloat[CiteSeer\label{fig:time-citeseer}]{
    \includegraphics[width=.3\textwidth]{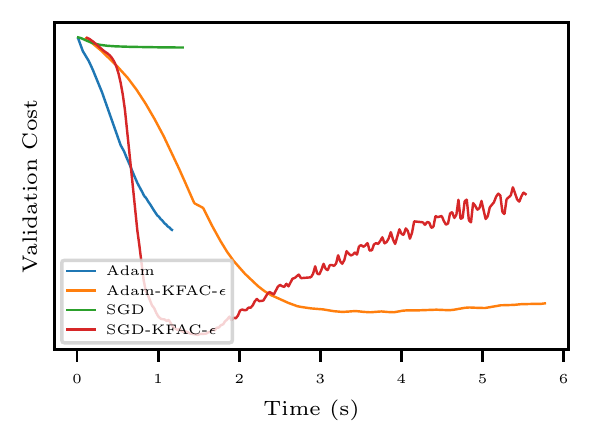}}
    \subfloat[Cora\label{fig:time-cora}]{%
    \includegraphics[width=.3\textwidth]{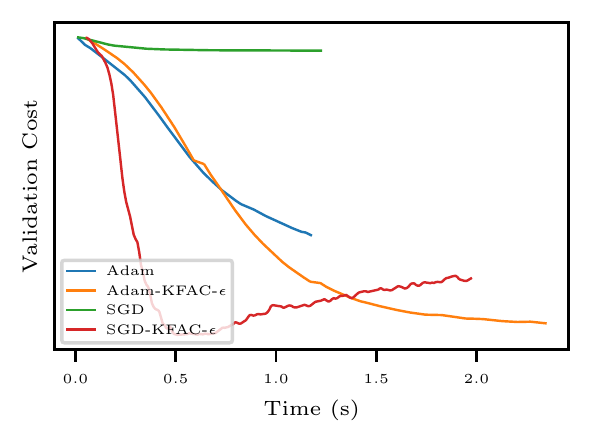}}
    \subfloat[PubMed\label{fig:time-pubmed}]{%
    \includegraphics[width=.3\textwidth]{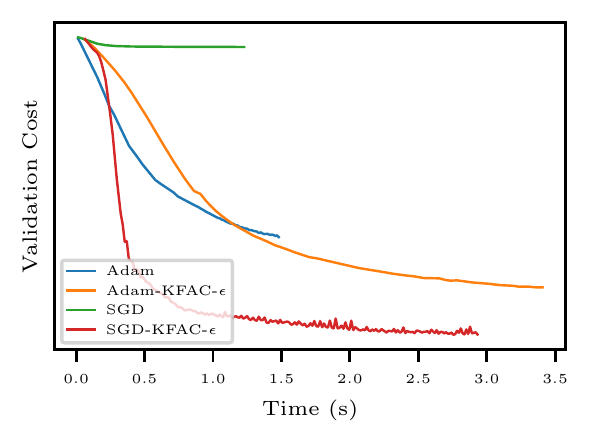}}
    \caption{The validation costs of four optimization methods with respect to the training time on the second split of Citation datasets over $10$ runs. 
    A $2$-layer GCN with a $64$-dimensional hidden variable is used in all experiments. 
    The proposed SGD-KFAC method shows the highest convergence rate among all other methods and it is slightly faster than Adam-KFAC.}
  \label{fig:time} 
\end{figure*}

\section{Conclusion} \label{sec:conc}
In this work, we introduced a novel optimization framework for graph-based semi-supervised learning.
After the distinct definition of semi-supervised problems with the adjacency distribution, we provided a comprehensive review of topics like semi-supervised learning, graph neural network, and preconditioning optimization (and \acrshort{ngd} as its especial case).
We adopted a commonly used probabilistic framework covering least-squared regression and cross-entropy classification.
In the node classification task, our proposed method showed to improve Adam and SGD not only in the validation cost but also in the test accuracy of \acrshort{gcn} on three splits of Citation datasets.
Extensive experiments were provided on the sensitivity to hyper-parameters and the time complexity.
As the first work, to the best of our knowledge, on the preconditioned optimization of graph neural networks, we not only achieved the best test accuracy but also empirically showed that it can be used with both Adam and SGD. 

As the preconditioner may significantly affect Adam, illustrating the relation between \acrshort{ngd} and Adam and effectively combining them can be a promising direction for future work. 
We also aim to deploy faster approximation methods than \acrshort{kfac} like \cite{george2018fast} and better sampling methods for exploiting unlabeled samples. Finally, since this work is mainly focused on single parameter layers, another possible research path would be adjusting KFAC to, for example, residual layers \cite{he2016deep}.
\section*{Acknowledgment}
YF and LL  were partially supported by NSF grants DMS Career 1654579 and DMS 1854779.

\printbibliography
\section*{Appendix}

\begin{lemma} \label{lemma:1}
The expected value of the Hessian of $-\log p(X, A, \bmy;\bmtheta)$ is equal to Fisher information matrix, i.e.
\begin{equation}
    -E_{X, A, \bmy \sim p(X, A, \bmy;\bmtheta)}[H_{\bmtheta}\log p(X, A, \bmy;\bmtheta)] = F
\end{equation}
\end{lemma}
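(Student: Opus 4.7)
The plan is to establish the standard information-matrix identity that relates the expected Hessian of the negative log-likelihood to the outer product of the score. Since the Fisher information in this paper is defined (see Eq.~\ref{eq:fisher2}) as $F = E[\nabla_{\bmtheta} \nabla_{\bmtheta}^{\sfT}]$ with $\nabla_{\bmtheta} = -\nabla_{\bmtheta} \log p(X,A,\bmy;\bmtheta)$, the goal reduces to showing
\begin{equation*}
E[\nabla_{\bmtheta} \log p \,(\nabla_{\bmtheta} \log p)^{\sfT}] \;=\; -E[H_{\bmtheta} \log p],
\end{equation*}
where the expectation is over $(X,A,\bmy) \sim p(X,A,\bmy;\bmtheta)$.

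The approach I would take is the classical two-step differentiation of the normalization identity. First, I would start from $\int p(X,A,\bmy;\bmtheta)\, d(X,A,\bmy) = 1$, and, assuming the standard regularity conditions that allow interchange of $\nabla_{\bmtheta}$ and $\int$, differentiate both sides to obtain $\int \nabla_{\bmtheta} p \, d(X,A,\bmy) = 0$. Rewriting this via the log-derivative trick $\nabla_{\bmtheta} p = p \,\nabla_{\bmtheta} \log p$ gives $E[\nabla_{\bmtheta} \log p] = 0$.

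Second, I would differentiate this vector identity once more with respect to $\bmtheta^{\sfT}$. Applying the product rule inside the integral,
\begin{equation*}
0 \;=\; \nabla_{\bmtheta}^{\sfT} \!\!\int p \,\nabla_{\bmtheta} \log p \, d(X,A,\bmy) \;=\; \int (\nabla_{\bmtheta} p)(\nabla_{\bmtheta} \log p)^{\sfT} + p \, H_{\bmtheta} \log p \, d(X,A,\bmy).
\end{equation*}
Using $\nabla_{\bmtheta} p = p\, \nabla_{\bmtheta} \log p$ in the first term, this becomes
\begin{equation*}
E[\nabla_{\bmtheta} \log p\,(\nabla_{\bmtheta} \log p)^{\sfT}] + E[H_{\bmtheta} \log p] \;=\; 0,
\end{equation*}
which rearranges to the claim.

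The main technical caveat is the interchange of differentiation and integration in both steps; I would just invoke the standard regularity assumptions (dominated convergence, smoothness of $p$ in $\bmtheta$, support independent of $\bmtheta$) rather than grinding through them, as is customary for such score-function identities. A secondary point worth noting is that the identity uses the full joint $p(X,A,\bmy;\bmtheta)$, but in the empirical setting used in the paper one replaces $q(X)$ and $q(A|X)$ by their empirical versions $\hat q(X)$ and $\hat q(A|X)$, which are parameter-free, so the derivatives fall only on $p(\bmy|X,A;\bmtheta)$ and the identity transfers directly to the empirical Fisher in Eq.~\ref{eq:empfisher} and the empirical Hessian introduced just before the lemma.
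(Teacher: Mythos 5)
Your proof is correct and takes essentially the same route as the paper's: both reduce the claim to the classical information-matrix identity by splitting $-H_{\bmtheta}\log p$ into the score outer product plus a term involving $H_{\bmtheta}p / p$, and both kill that extra term by interchanging differentiation with the integral of $p$ (the paper writes this as $H_{\bmtheta}E[1]=0$, you as the twice-differentiated normalization constraint $\int p = 1$, which is the same step). Your explicit statement of the regularity conditions and the remark that the identity transfers to the empirical Fisher because $\hat q(X)$ and $\hat q(A|X)$ are parameter-free are welcome additions that the paper leaves implicit.
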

\begin{proof}
The Hessian of $f(\bmtheta)$ can be written as the Jacobian of $\nabla_{\bmtheta}f$:
\begin{equation}
    H_{\bmtheta}f(\bmtheta) = J_{\bmtheta} \nabla_{\bmtheta} f(\bmtheta).
\end{equation}
So for the Hessian of the negative log-likelihood becomes:
\begin{align}
    - H_{\bmtheta} & \log p(X, A, \bmy;\bmtheta) \\
    = & - J_{\bmtheta} \frac{\nabla_{\bmtheta} p(X, A, \bmy;\bmtheta)}{p(X, A, \bmy;\bmtheta)} \\
    = & - \frac{H_{\bmtheta} p(X, A, \bmy;\bmtheta) . p(X, A, \bmy;\bmtheta)}{p(X, A, \bmy;\bmtheta) . p(X, A, \bmy;\bmtheta)} \\
      & - \frac{\nabla_{\bmtheta}p(X, A, \bmy;\bmtheta)\nabla_{\bmtheta}p(X, A, \bmy;\bmtheta)^{\sfT}}{p(X, A, \bmy;\bmtheta) . p(X, A, \bmy;\bmtheta)} \\
    & = - \frac{H_{\bmtheta} p(X, A, \bmy;\bmtheta)}{p(X, A, \bmy;\bmtheta)} + \nabla_{\bmtheta}\nabla_{\bmtheta}^{\sfT}
\end{align}
Taking the expectation over $p(X, A, \bmy;\bmtheta)$, the first term turns into zero:
\begin{align}
    E_{X, A, \bmy \sim p(X, A, \bmy;\bmtheta)} & [\frac{H_{\bmtheta} p(X, A, \bmy;\bmtheta)}{p(X, A, \bmy;\bmtheta)}] \\ 
    = & H_{\bmtheta} E_{X, A, \bmy \sim p(X, A, \bmy;\bmtheta)}[1] \\
    = & 0
\end{align}
and Fisher is defined as the expected value of the second term.
\end{proof}

\end{document}